\documentclass{article}
\usepackage[a4paper, total={6in, 8in}]{geometry}
%\usepackage{natbib}

% The following packages will be automatically loaded:
% amsmath, amssymb, natbib, graphicx, url, algorithm2e
\usepackage{tikz}
\usepackage{amsmath}
\usepackage{amsthm}
\usepackage{hyperref, cleveref}
\hypersetup{
    colorlinks=true,
    linkcolor=blue,
    filecolor=blue,      
    urlcolor=blue,
    citecolor=blue,
    pdfpagemode=FullScreen,
    }
\usepackage{amsfonts}
\usepackage{xcolor}
\usepackage[shortlabels]{enumitem}

\usepackage[utf8]{inputenc}
\usepackage{indentfirst}
\usepackage{amsbsy,amssymb,amsthm,amsmath, amsfonts,fixltx2e,bbding} 
\usepackage{hyperref, cleveref}
\usepackage{graphicx}
%\frenchspacing
\usepackage{ragged2e}
\makeatletter
\newcommand*{\rom}[1]{\expandafter\@slowromancap\romannumeral #1@}
\makeatother

\usepackage[ruled, lined, linesnumbered, longend]{algorithm2e}

\title{Simple online learning with consistent oracle}

\author{Alexander Kozachinskiy$^{12}$, Tomasz Steifer$^{134}$}

\date{%
    $^1$Instituto Milenio Fundamentos de los Datos, Chile\\%
    $^2$Centro Nacional de Inteligencia Artificial, Chile\\%
    $^3$Instituto de Ingeniería Matemática y Computacional, Universidad Católica de Chile\\
    $^4$Institute of Fundamental Technological Research, Polish Academy of Sciences\\%
}
\newtheorem{definition}{Definition}
\newtheorem{theorem}{Theorem}
\newtheorem{lemma}{Lemma}

\newtheorem{prop}{Proposition}
\newtheorem{open}{Open Problem}
\newtheorem{remark}{Remark}

\newcommand{\ldim}{\mathsf{Ldim}}

\newcommand{\cons}{{\textnormal{\texttt{ConsOracle}}}}

\newcommand{\createadv}[1]{\textnormal{\texttt{CreateAdv}($#1$)}}
\newcommand{\voteandupdate}[1]{\textnormal{\texttt{VoteAndUpdate}($#1$)}}

\newcommand{\proc}[1]{\textnormal{\texttt{Procedure}($#1$)}}

\newcommand{\predict}{\textnormal{\texttt{Learner}}}

\begin{document}
\maketitle
\begin{abstract}
We consider online learning in the model where a learning algorithm can access the class only via the \emph{consistent oracle}---an oracle, that, at any moment, can give a function from the class that agrees with all examples seen so far. This model was recently considered by Assos et al.~(COLT'23). It is motivated by the fact that standard methods of online learning rely on computing the Littlestone dimension of subclasses, a computationally intractable problem.

Assos et al.~gave an online learning algorithm in this model that makes at most $C^d$ mistakes on classes of Littlestone dimension $d$, for some absolute unspecified constant $C > 0$. We give a novel algorithm that makes at most $O(256^d)$ mistakes. Our proof is significantly simpler and uses only very basic properties of the Littlestone dimension. We also show that there exists no algorithm in this model that makes less than $3^d$ mistakes.

 %Our algorithm (as well as the algorithm of Assos et al.) solves an open problem by Hasrati and Ben-David~(ALT'23). Namely, it demonstrates that every class of finite Littlestone dimension with recursively enumerable representation admits a computable online learner (that may be undefined on unrealizable samples).
 
\end{abstract}

\section{Introduction}

\paragraph*{Overview of online learning.}

Online learning is one of the most influential theoretical models in machine learning. In this model, introduced by Littlestone~\cite{littlestone1988learning} and, in different terms, by Angluin~\cite{angluin1988queries}, a learner tries to predict values of some objective function $f\colon X \to\{0, 1\}$, not known to the learner, belonging to some hypothesis class $H\in\{0, 1\}^X$, known to the learner. The learner receives inputs to $f$ for prediction in an order, chosen by an adversary. Predictions have to be made in an ``online fashion''. More specifically, in each ``round'' of prediction, the learner receives  $x\in X$, then the learner outputs its prediction for the value of $f$ on $x$, and after that $f(x)$ is revealed to the learner. This repeats for infinitely many rounds.

Eventually, the learner wants to be able to predict the values of $f$ correctly every time. Before that, the learner might experience several rounds with mistakes, when their predictions did not coincide with the real value of $f$. The question is how to use rounds with mistakes most efficiently and ``stabilize'' on $f$ faster, after as few mistakes as possible.

The most basic measure of the efficiency of a learning algorithm in this model is its worst-case number of mistakes, with the maximum taken over all objective functions $f$ from the hypothesis class $H$, and all possible ordering of the inputs.
Littlestone~\cite{littlestone1988learning} characterized hypothesis classes $H$ for which there exists a learning algorithm with finite worst-case mistake bound. Such classes are now known as Littlestone classes. More specifically, he has shown that the minimal achievable worst-case mistake bound of an online learning algorithm for a hypothesis class $H$ is equal to a combinatorial measure, now called the \emph{Littlestone dimension} of $H$.

\medskip

Over the years, online learning has accumulated more and more interest.  Since the 90s, it has been studied in connection to PAC learning, proposed by Valiant~\cite{valiant1984theory}, another classical theoretical model in machine learning. In the PAC model, given an objective function $f$ from a hypothesis class $H$ (as in online learning, $H$ is known to the learner while $f$ is not), the learner can sample data points of the form $(x, f(x))$ from some unknown distribution $\mu$ over the set of inputs $x\in X$, chosen by an adversary. The goal of the learner is to provide, with high probability, a function $g$ which approximates $f$ up to a bounded error. Furthermore, this has to be done after seeing a fixed number of data points (where this number is independent of the choice of underlying distribution $\mu$). 

% correctly with high probability for inputs sampled from $\mu$.

Hypothesis classes $H$ that admit a learning algorithm in the PAC model have been characterized as classes having finite \emph{Vapnik-Chervonenkis dimension}. Vapnik-Chervonenkis dimension is a lower bound on the Littlestone dimension, meaning that every Littlestone class is PAC learnable. Moreover, there is a polynomial-time reduction,  turning any online learning algorithm for a class $H$ into a  PAC-learning algorithm for $H$~\cite{littlestone1989line}.

Recently, Alon et al.~\cite{alon2022private} have shown that Littlestone classes are exactly classes that admit \emph{approximately differently private} PAC learning algorithm, that is, a (randomized) PAC learning algorithm that only mildly changes its output distribution under changing one point in a sample. Differential privacy can be seen as a notion of stability of the learner and indeed, the result of Alon et al.~was followed by a series of related papers, which characterized Littlestone classes via different notions of stability: such as replicability~\cite{impagliazzo2022reproducibility,bun2023stability}, statistical indistinguishability~\cite{kalavasis2023statistical} or finite information complexity~\cite{pradeep2022finite}.

\medskip
\paragraph*{Oracles for online learning}
These results all use some kind of reduction to the online learning setting, motivating the need for \emph{time-efficient online learning algorithms}. The algorithm of Littlestone~\cite{littlestone1988learning}, now usually called the Standard Optimal Algorithm (SOA), achieves the optimal mistake bound but it is not necessarily time-efficient. The problem is that SOA relies on the oracle that computes the Littlestone dimension of subclasses of $H$. Such oracle in some settings is known to be computationally intractable~\cite{frances1998optimal,manurangsi2017inapproximability,manurangsi2023improved,hasrati2023computable}. For instance, if the set $X$ and the hypotheses class $H$ are finite and given as a truth table of size $n$, then, under standard cryptographic assumptions, there is no algorithm computing Littlestone dimension in $n^{o(\log n)}$ time~\cite{frances1998optimal}. In the case when $X$ and $H$ are infinite SOA might be uncomputable, even if $H$ is given as a decidable set of programs~\cite{hasrati2023computable}.

Assos et al.~\cite{assos2023online} suggested developing online learning algorithms that potentially attain non-optimal mistake bounds but use more feasible oracles. One such oracle, introduced by them, is the \emph{consistent oracle}. A consistent oracle receives on input a \emph{sample}, that is, a sequence of $m$ ``input-output'' pairs:
\[(x_1, y_1), \ldots, (x_m, y_m)\in X\times \{0, 1\}.\]
If this sample is \emph{realizable}, that is, there is a function $f\in H$ with $f(x_1) = y_1, \ldots, f(x_m) = y_m$, the consistent oracle finds some such $f$ in $H$. If the sample is not realizable, the consistent oracle is undefined on it (we assume that running the consistent oracle on the non-realizable sample leads to an infinite loop).

Consistent oracle can be seen as a weak form of the more standard Empirical Risk Minimizer (ERM) oracle. ERM, given a sample $S$, finds a function from the class $H$ which achieves the minimal empirical error possible on $S$. Consistent oracle gives a function whenever it is possible to give one that achieves zero empirical error, but otherwise can be undefined. In general, the existence of an efficient consistent oracle is a weaker requirement than the existence of an efficient ERM.

%Following a very recent paper by Assos et al.~\cite{assos2023online},
%we consider a scenario when Learner has access to a \emph{consistent oracle} for the class $H$. Namely, at each round of the game, Learner can ask the oracle for a function from the class which is consistent with all Adversary's labels revealed so far. We assume that this is the only information about the class that is available to Learner. Consistent oracle can be also understood as a weak form of Empirical Risk Minimizer (ERM) oracle---ERM fits to a sample $S$ a function from the class $H$ which achieves the minimal empirical error possible on $S$ using $H$. Consistent oracle gives a function whenever it is possible to give one which achieves zero empirical error, but otherwise can be undefined. In general, existence of an efficient consistent oracle is a weaker requirement than existence of an efficient ERM.

We are aware of two frameworks where consistent
oracle is efficiently computable while SOA is not. The first one is when the domain is finite and the hypotheses $H$ are given as a truth table. Then consistent oracle can be computed in polynomial time (in the size of the table) while SOA, under standard complexity assumptions, requires quasi-polynomial time~\cite{frances1998optimal,manurangsi2017inapproximability,manurangsi2023improved}. The other framework is concerned with \emph{recursively enumerably representable} (RER) hypothesis classes, studied by Hasrati and Ben-David~\cite{hasrati2023computable}.
 These are classes that can be given as recursively enumerable sets of programs that compute functions from the class. As Hasrati and Ben-David observed, there are RER classes of finite Littlestone dimension for which there exists no computable SOA. On the other hand, a RER class always has a computable consistent oracle. Indeed, we can start enumerating programs for functions of the class until we find one that agrees with all labels of Adversary so far. Since Adversary is constrained to be consistent with some function from the class, this procedure always halts.

 \paragraph*{Our results}
Following Assos et al., we are concerned with online learning algorithms that know nothing about the hypothesis class $H$ except that they have access to a consistent oracle for $H$. This model can be formally defined as a game, where a learning algorithm plays against the adversary who gives inputs for prediction and reveals their labels, and also answers queries to the consistency oracle. In the game, there is no fixed hypothesis class $H$. Rather, in the game that we call \emph{the consistent oracle model for classes of Littlestone dimension $d$},  the adversary has the following restriction -- at any moment, the Littlestone dimension of the set of functions the adversary has used as answers to queries to the consistency oracle cannot exceed $d$.
 
 Assos et al.~gave an online learning algorithm in the consistent oracle model for classes of Littlestone dimension $d$ that makes at most $C^d$ mistakes, where $C > 0$ is some absolute constant (unspecified in their paper). Our main contribution is a new, simpler algorithm that achieves the mistake bound of at most $O(256^d)$. Our result follows from an elementary, self-contained proof that uses only basic facts about the Littlestone dimension. This contrasts with a rather complicated combinatorial proof given by Assos et al., which requires advanced results on threshold dimension and various notions of fat-shattering dimensions. Arguably, our algorithm is itself simpler. For any prediction, it just takes a simple majority vote over some functions, obtained from the consistent oracle on previous steps. At the same time, the algorithm of Assos et al.~uses a weighted majority vote, with exponential weights that are updated in each round. 
 
Our analysis is carried out for improper learning only, i.e., the case when we allow the learner to predict using functions from outside the class. Deterministic proper learning is not always possible for Littlestone classes~\cite{angluin1988queries,chase2020bounds}. However, the algorithm of Assos et al.~admits randomization that turns it into a randomized proper learner with probabilistic bounds on the expected number of mistakes. We leave it as an open problem whether our algorithm could be into randomized proper learner.

Although we only have exponential mistake bounds for both algorithms, there is a chance that in practice the number of mistakes will be polynomial. In any case, as Assos et al.~point out, there might be more sense in trying to run an algorithm with an exponential mistake bound but with a feasible oracle than trying to run SOA where already a single call to the oracle might be unfeasible. Nevertheless, to better understand the model,
 we complement our algorithm with a formal exponential lower bound. We show that there is no online learning algorithm in the consistent oracle model for classes of Littlestone dimension $d$ that makes less than $3^d$ mistakes. We leave open a question of what is the optimal exponent, achievable in the consistent oracle model for Littlestone dimension $d$.
 \begin{open}
     What is the minimal $C > 0$ such that for all $d$ there exists an online learning algorithm in the model with consistent oracle, making at most $O(C^d)$ mistakes for classes of Littlestone dimension $d$?
 \end{open}
 From our results, it follows that $3\le C\le 256$.

 \paragraph*{Applications.} Both our algorithm and the algorithm of Assos et al.~are time-efficient -- to produce a prediction, they need time which is linear in the number of mistakes so far. Thus, both algorithms imply efficient online learners for the two settings mentioned above. Namely, for the setting when $H$ is given as a truth table, we get a polynomial-time online learning algorithm with at most $O(256^d)$ mistakes. In turn, for every RER class of finite Littlestone dimension, we get the existence of a computable online learner. 

 The above results are stated for \textit{realizable} setting, that is when the sample is guaranteed to be consistent with some function from the class. Both our algorithm and that of Assos et al.~can fail to produce an output on nonrealizable samples. Under certain additional assumptions, both algorithms can be turned into randomized agnostic learners---i.e. learners which are defined on nonrealizable samples and which have sublinear expected regret---using the standard prediction with experts approach of~\cite{ben2009agnostic}. This is possible only if the algorithms can be made total---for instance, in case when we have additional access to an oracle deciding for a given sample if it is realizable or not. For example, this is true for the finite setting when $H$ is given as a truth table.

 \paragraph*{Organization} We give Preliminaries in \Cref{sec_prel}. In \Cref{sec_model}, we give a formal definition of the consistent oracle model and show a simple $2^d$ lower bound on the number of mistakes for Littlestone dimension $d$ in this model. In \Cref{sec_alg}, we give an algorithm, achieving $O(256^d)$ mistake bound. Finally, in \Cref{sec_lower}, we give the $3^d$ lower bound.

\section{Preliminaries}
\label{sec_prel}

Here we give necessary definitions, regarding hypothesis classes and Littlestone dimension. Fix an infinite set $X$ called \emph{the domain}. Non-empty sets $H\subseteq \{0, 1\}^X$ will be called \emph{hypothesis classes}.
A \emph{labeled tree} is a complete rooted binary tree of depth $d$, for some $d\in\mathbb{N}$, in which every non-leaf node is labeled by an element of $X$. We also assume that, for every non-leaf node $v$, if we consider two edges that go from $v$ to its children, one of them is labeled by $0$ and the other one is labeled by $1$. Correspondingly, one child of $v$ will be called the $0$-child of $v$ and the other child will be called the $1$-child of $v$. Every leaf $l$ in such a tree can be understood as an assignment of elements of $X$, appearing on the path to $l$, to 0s and 1s. The idea is that when we descend from a node, labeled by $x\in X$, to one of the children of this node, namely, to its $y$-child for some $y\in\{0, 1\}$, this can be understood as if we assign $y$ to $x$. More specifically, any leaf $l$ can be assigned a sequence of pairs $(x_1, y_1), \ldots, (x_{d}, y_{d})\in X\times\{0, 1\}$, where $x_i\in X$ is the label of the depth-$(i-1)$ ancestor of $l$ (the depth-0 ancestor of $l$ is the root), and $y_i\in\{0, 1\}$ is the label of the edge, leading from the depth-$(i-1)$ ancestor of $l$ to its depth-$i$ ancestor. Next, we say that a leaf $l$ is \emph{consistent} with a function $f\colon X\to\{0, 1\}$ if so is the corresponding partial assignment, that is, if 
\[f(x_1) = y_1, \ldots, f(x_{d}) = y_{d}.\]
Now, we say that a hypothesis class $H$ shatters a labeled tree $T$ if every leaf $l$ is consistent with some function in $H$.  

The Littlestone dimension of a hypothesis class $H$, denoted by $\ldim(H)$,  is the maximal $d\ge 1$ such that $H$ shatters some labeled tree of depth $d$. If there is no such $d\ge 1$, that is, if there is not even a depth-1 tree, shattered by $H$ (this happens exactly when $H$ has just one function), we set $\ldim(H) = 0$. In turn, if for every $d\ge 1$ there exits a labeled tree of depth $d$, shattered by $H$, then we set $\ldim(H) = +\infty$. We leave $\ldim(\varnothing)$ undefined.

It is important to note again that by labeled trees we only mean complete trees, that is, if $H$ shatters some incomplete tree of depth $d$ (some leaves are at depth $d$ but some are at smaller depth), this does not count.

We mention two standard  properties of the Littlestone dimension that easily follow from the definition.
\begin{prop}
\label{prop_size}
    For any hypothesis class $H$ we have $\ldim(H) \le \log_2(|H|)$.
\end{prop}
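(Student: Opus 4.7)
The plan is to exhibit an injection from the set of leaves of a shattered tree into $H$, and then use the fact that a complete binary tree of depth $d$ has $2^d$ leaves.

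Concretely, suppose $\ldim(H) \ge d$ with $d \ge 1$, so there is a depth-$d$ labeled tree $T$ shattered by $H$. For each leaf $l$ of $T$, the definition of shattering supplies some $f_l \in H$ consistent with $l$. I would prove that the assignment $l \mapsto f_l$ is injective. Given two distinct leaves $l_1 \ne l_2$, their root-to-leaf paths are distinct, so they share a common prefix up to some internal node $v$ after which one path descends to the $0$-child of $v$ and the other to the $1$-child. Writing $x \in X$ for the label of $v$, the partial assignment attached to $l_1$ contains the pair $(x, 0)$ while that of $l_2$ contains $(x, 1)$ (or vice versa). Consistency then forces $f_{l_1}(x) = 0 \ne 1 = f_{l_2}(x)$, hence $f_{l_1} \ne f_{l_2}$. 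So the map is injective, yielding $2^d \le |H|$ and $d \le \log_2 |H|$.

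Taking the supremum over $d$ gives $\ldim(H) \le \log_2|H|$. The boundary cases are painless: if $\ldim(H)=0$, then $|H|\ge 1$ so $\log_2|H|\ge 0$; if $\ldim(H) = +\infty$, the above shows $|H| \ge 2^d$ for every $d$, hence $\log_2|H| = +\infty$ as well. There is no real obstacle here; the only thing to be mildly careful about is using the assumption that the shattered tree is \emph{complete}, which is what guarantees the $2^d$ leaves in the counting step (the paper explicitly flags this convention right before the proposition).
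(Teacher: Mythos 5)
Your proof is correct and is the standard argument the paper implicitly expects; the paper does not actually spell out a proof of this proposition (it is stated as one of two "standard properties... that easily follow from the definition"), and the injection from leaves of a shattered complete tree into $H$ is exactly the canonical way to see it. Your handling of the boundary cases and the explicit appeal to completeness of the tree are both appropriate.
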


\begin{prop}
\label{prop_restictions}
    Consider any hypothesis class $H$ and any $x\in X$. Define $H_0 = \{f\in H\mid f(x) = 0\}$ and $H_1 = \{f\in H\mid f(x) = 1\}$. Assume that both $H_0$  and $H_1$ are non-empty. Then
    \[\ldim(H) \ge \min\{\ldim(H_0), \ldim(H_1)\} + 1.\]
\end{prop}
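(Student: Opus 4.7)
The plan is to prove the inequality by explicitly building a labeled tree of depth $\min\{\ldim(H_0), \ldim(H_1)\} + 1$ that is shattered by $H$. First I would dispose of the infinite case: if both $\ldim(H_0)$ and $\ldim(H_1)$ are $+\infty$, then since $H_0 \subseteq H$, the class $H$ shatters trees of every finite depth too, so $\ldim(H) = +\infty$ and the inequality holds vacuously.

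Now assume $d := \min\{\ldim(H_0), \ldim(H_1)\}$ is finite. I would pick two labeled trees $T_0$ and $T_1$ of depth $d$, shattered by $H_0$ and $H_1$ respectively (when $d = 0$, these are just the single-node trees, which any non-empty class shatters). Then I would assemble a new labeled tree $T$ of depth $d+1$ as follows: the root is labeled by $x$, its $0$-child is the root of $T_0$, and its $1$-child is the root of $T_1$. All other labels are inherited from $T_0$ and $T_1$. This $T$ is complete of depth $d+1$, as required by the definition.

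The remaining step is to verify that $H$ shatters $T$. Take any leaf $l$ of $T$; it lies in exactly one of the two subtrees, say $T_0$. Then the sequence of pairs $(x_1,y_1),\ldots,(x_{d+1},y_{d+1})$ associated with $l$ in $T$ begins with $(x,0)$ and is followed by the sequence $(x_1',y_1'),\ldots,(x_d',y_d')$ associated with $l$ viewed as a leaf of $T_0$. Since $H_0$ shatters $T_0$, there is some $f\in H_0$ agreeing with the latter sequence; and since $f\in H_0$, we also have $f(x)=0$, so $f$ agrees with the full sequence in $T$. The case $l \in T_1$ is symmetric. Because $H_0 \cup H_1 \subseteq H$, this exhibits a depth-$(d+1)$ tree shattered by $H$, yielding $\ldim(H) \ge d+1$.

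There is no real obstacle here; the only point that needs a bit of care is making sure the construction respects the formal definition of a labeled tree (completeness, and the convention that depth-$0$ corresponds to a single leaf), which is why I would state the edge case $d=0$ explicitly and handle the $+\infty$ case separately before entering the main argument.
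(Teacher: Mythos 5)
Your proof is correct and is the standard argument for this fact; the paper itself does not spell out a proof, asserting only that the proposition ``easily follows from the definition.'' The construction you give---root labeled $x$, with the $0$-child and $1$-child subtrees being depth-$d$ trees shattered by $H_0$ and $H_1$ respectively---is exactly the expected one, and your handling of the $d = 0$ and $d = +\infty$ edge cases is consistent with the paper's conventions for $\ldim$. The one small point worth being explicit about (you implicitly use it when you ``pick two labeled trees of depth $d$'') is the monotonicity fact that a class shattering a depth-$d'$ tree also shatters a depth-$d$ tree for any $1 \le d \le d'$, obtained by truncating; this is immediate but deserves a sentence if you want the argument to be fully self-contained.
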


Online learning of a hypothesis class $H$ is the following game, played between Learner and Adversary. First, Adversary chooses an ``objective function'' $f\in H$, without showing it to Learner. Then the game continues infinitely many rounds and in the $r$th round, $r  = 1, 2, 3,\ldots,$ the following happens: Adversary names $x_r\in X$, Learner predicts $\widehat{y_r}\in\{0,1\}$, and Adversary ``reveals'' $f(x_r)$. An online learning algorithm that makes at most $d$ mistakes on $H$ is a strategy of Learner that guarantees that the number of $r$ with $\widehat{y}_r\neq f(x_r)$ does not exceed $d$. 

\begin{remark}
    There exists another version of this game. In this version, Adversary does not choose $f\in H$ in advance, but instead, for each $r$, in the end of the $r$th round it names some $y_r\in\{0, 1\}$, under the restriction that there must exist some $f\in H$ such that $f(x_1) = y_1, \ldots, f(x_r) = y_r$. These two versions are equivalent in the sense that Learner has a strategy, guaranteeing at most $d$ mistakes, in one game if and only if it has such strategy in the other game, for any $d\in\mathbb{N}$. We point out that these two games are no longer equivalent when Learner just wants to guarantee that the number of mistakes is finite in every play (but can be arbitrarily large in different plays). Learnability in this setting also admits a combinatorial characterization through a generalization of Littlestone dimension to ordinal numbers~\cite{bousquet2021theory}.
\end{remark}

\begin{theorem}[Littlestone~\cite{littlestone1988learning}]
\label{thm:little}
    The Littlestone dimension of a hypothesis class $H$ is equal to the minimal $d$ for which there exists an online learning algorithm making at most $d$ mistakes on $H$.
\end{theorem}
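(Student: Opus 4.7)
The plan is to prove the two inequalities separately. Write $d = \ldim(H)$, assumed finite (the infinite case follows by applying the finite argument to arbitrarily large finite sub-dimensions). I will show (i) every online algorithm can be forced to make at least $d$ mistakes, and (ii) there is an algorithm (the Standard Optimal Algorithm) making at most $d$ mistakes.

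\paragraph{Lower bound.}
For (i), I use the tree witnessing $\ldim(H) \ge d$. Since $H$ shatters some labeled tree $T$ of depth $d$, Adversary walks Learner down $T$ as follows: start at the root, and in round $r$ present as $x_r$ the $X$-label of the current node $v_r$. After Learner announces $\widehat{y}_r$, Adversary declares $y_r = 1-\widehat{y}_r$ and moves to the $y_r$-child of $v_r$. This produces $d$ mistakes in the first $d$ rounds. The path ends at a leaf $\ell$ whose associated partial assignment is $(x_1,y_1),\dots,(x_d,y_d)$; because $H$ shatters $T$, some $f \in H$ is consistent with $\ell$, and Adversary can play $f$ forever after. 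This yields a valid strategy in the game with fixed objective function $f$, proving any algorithm makes at least $d$ mistakes.

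\paragraph{Upper bound (SOA).}
For (ii), I describe the Standard Optimal Algorithm and argue by induction on $\ldim$. Maintain a ``version space'' $V \subseteq H$, initialized to $V = H$, consisting of all functions still consistent with revealed labels. In round $r$ with input $x_r$, let $V_b = \{f \in V : f(x_r) = b\}$ for $b \in \{0,1\}$. If one of the two sets is empty, predict the unique label that keeps $V$ non-empty (no mistake is possible in this case). Otherwise, predict $\widehat{y}_r = \arg\max_{b \in \{0,1\}} \ldim(V_b)$, breaking ties arbitrarily. Upon receiving the true label $y_r$, update $V \leftarrow V_{y_r}$.

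\paragraph{Analysis of mistake bound.}
The claim is that $\ldim(V)$ strictly decreases with every mistake. Indeed, a mistake happens only in the second case above, when both $V_0$ and $V_1$ are non-empty, so Proposition~\ref{prop_restictions} applies and gives $\ldim(V) \ge \min\{\ldim(V_0), \ldim(V_1)\} + 1$. A mistake means $y_r = 1 - \widehat{y}_r$, and by the choice of $\widehat{y}_r$ we have $\ldim(V_{1-\widehat{y}_r}) \le \ldim(V_{\widehat{y}_r})$, so $\ldim(V_{y_r}) = \min\{\ldim(V_0),\ldim(V_1)\} \le \ldim(V) - 1$. Starting from $\ldim(V) = d$, at most $d$ mistakes can occur before $\ldim(V) = 0$ (i.e., $|V|=1$, after which no further mistakes are possible). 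Combining the two bounds yields the theorem.

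\paragraph{What is delicate.}
The routine part is the SOA analysis; the only subtlety is the edge case where one of $V_0, V_1$ is empty, which must be handled separately since Proposition~\ref{prop_restictions} requires both to be non-empty. The more conceptual step is the lower bound: one must notice that an Adversary restricted to pre-committing to $f \in H$ can still execute the tree-walk strategy because shattering guarantees the existence of a consistent $f$ at the endpoint leaf — this is exactly what the equivalence noted in the Remark lets us exploit.
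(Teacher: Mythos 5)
The paper states this theorem purely as a citation to Littlestone (1988) and offers no proof of its own, so there is nothing in the text to compare against; you have supplied the standard argument from scratch. Your proof is correct: the lower bound forces $d$ mistakes by walking the learner down a tree of depth $d$ shattered by $H$, and the upper bound is the Standard Optimal Algorithm, using Proposition~\ref{prop_restictions} to show that each mistake strictly decreases the Littlestone dimension of the version space, together with the fact (built into the paper's definition) that $\ldim(V)=0$ for non-empty $V$ forces $|V|=1$, after which no mistake can occur. The one place that deserves a sentence more care is the lower bound: the paper's game requires Adversary to commit to $f\in H$ before round 1, whereas you describe Adversary choosing labels adaptively and only afterward observing that a consistent $f$ exists at the reached leaf. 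You do flag this in your closing paragraph, and the fix is routine — a learner in this model is a deterministic strategy, so Adversary can simulate it offline, determine which leaf the walk would reach, and commit in advance to an $f\in H$ consistent with that leaf (equivalently, invoke the Remark in the paper on the equivalence of the two game formulations). With that made explicit, the proof is complete.
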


\section{The model and a simple lower bound}
\label{sec_model}

In this section, we formally introduce the consistent oracle model for classes of Littlestone dimension $d$, for every $d\ge 1$. This will be modeled as a game between Learner and Adversary. Adversary is responsible for giving inputs for prediction to Learner, revealing their values, and answering queries to the consistent oracle, asked by Learner. Recall that we assume that consistency oracle may go into the infinite loop when run on a non-realizable sample. Therefore, we assume that Learner queries the consistent oracle only on samples that are surely ``realizable'', that is, on those that consist of Adversary's assignments,  used so far. Without loss of generality, the consistency oracle is run after each round, which we emulate in the game by requiring Adversary to provide a function, agreeing with all its assignments so far.

Formally, the consistent oracle model for classes of Littlestone dimension $d$ is the following perfect information game, played between Learner and Adversary. The game proceeds in infinitely many rounds, indexed by $r = 1, 2, 3, \ldots$ In the round number $r$, the following happens:
\begin{itemize}
    \item Adversary names  $x_r\in\mathbb{N}$;
    \item Learner names  $\widehat{y_r}\in\{0, 1\}$.
    \item Adversary names a bit $y_r\in\{0, 1\}$ and a function $f_r \colon X \to\{0, 1\}$ such that, first,
    \[f_r(x_1) = y_1, \ldots,  f_r(x_r) = y_r\]
    (the function $f_r$ must be consistent with all assignments of Adversary we have so far), and second,
    \[\ldim(\{f_1, \ldots, f_r\}) \le d\]
    (Adversary makes sure that its functions could come from a class of Littlestone dimension at most $d$).
\end{itemize}
\begin{remark}
Note that both players always have at least one legal move from any reachable position. Adversary, for example, can just use the same function as in the previous round.
\end{remark}
The \emph{number of mistakes} along an infinite play is the number of rounds $r$ such that $\widehat{y_r}\neq y_r$ (in some plays, of course, it might be infinite).

\emph{An online learning algorithm with consistent oracle that makes at most $n$ mistakes for classes of Littlestone dimension $d$} is a strategy of Learner in the above game, ensuring that the number of mistakes along any play is at most $n$.

%\begin{remark}
 %   Fix a bound on the number of mistakes $n$. We obtain a game with the following winning condition: Learner wins if the number of mistakes along the play does not exceed $n$. Since the game is infinite, it requires some argument to show that the game is \textbf{determined}, meaning that for every $d$ and $n$, either Learner or Adversary has a winning strategy. This is because the winning condition for Adversary is open, that is, it can be represented as a union of all infinite extensions of some set of finite plays (those where Learner makes more than $n$ mistakes). It is well-known that all infinite games with open winning conditions are determined~\cite{gale1953infinite}.
%\end{remark}

We observe that there is a simple strategy of Adversary, enforcing Learner to make at least $2^{d+1} - 1$ mistakes. This shows that no online learning algorithm in the model with consistent oracle, making less than $2^{d+1} - 1$, is possible. In Section \ref{sec_lower}, we improve this lower bound to $3^d$.
\begin{prop}
For every $d\ge 1$,
    there exists no online learning algorithm that in the consistent oracle model for classes of Littlestone dimension $d$ makes at most $2^{d+1} - 2$ mistakes.
\end{prop}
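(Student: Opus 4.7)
The plan is to hand Adversary the most naive strategy imaginable and let Proposition~\ref{prop_size} discharge the Littlestone dimension constraint. The key quantitative observation is that any family of at most $2^{d+1}-1$ functions has Littlestone dimension at most $\log_2(2^{d+1}-1) < d+1$, hence at most $d$ by integrality; so Adversary can afford to introduce a fresh function in each of $2^{d+1}-1$ consecutive rounds without ever violating the dimension cap.

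Concretely, in each round $r = 1,2,\ldots,2^{d+1}-1$ I would have Adversary pick a fresh input $x_r \in X$ (never queried before), observe $\widehat{y_r}$, and set $y_r := 1-\widehat{y_r}$ so that the round is a mistake. For the certifying hypothesis I would use the minimal-support extension
\[
f_r(z) := \begin{cases} y_i & \text{if } z = x_i \text{ for some } i \le r,\\ 0 & \text{otherwise},\end{cases}
\]
which trivially agrees with $(x_1,y_1),\ldots,(x_r,y_r)$. From round $2^{d+1}$ onward, Adversary simply replays the same function $f_{2^{d+1}-1}$ forever, labeling each freshly named $x_r$ by $f_{2^{d+1}-1}(x_r)$; this introduces no new hypotheses, so the dimension bound is preserved for the rest of the (infinite) play.

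Legality then reduces to a single cardinality check: throughout the entire play, the set of hypotheses ever produced by Adversary has cardinality at most $2^{d+1}-1$, so by Proposition~\ref{prop_size} its Littlestone dimension is at most $d$. Since the first $2^{d+1}-1$ rounds are all mistakes by construction, the number of mistakes strictly exceeds $2^{d+1}-2$, contradicting the assumed bound.

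I do not anticipate any real obstacle; Proposition~\ref{prop_size} does all the work. The only subtlety worth flagging is the appeal to integrality of $\ldim$ when passing from the strict inequality $\log_2(2^{d+1}-1) < d+1$ to the bound $\ldim \le d$, and the mundane bookkeeping needed to spell out Adversary's moves after the $(2^{d+1}-1)$-st mistake so that the strategy is well-defined on every infinite branch of the game tree.
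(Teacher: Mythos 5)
Your proposal is correct and takes essentially the same approach as the paper: Adversary negates every prediction for $2^{d+1}-1$ rounds, supplies an arbitrary consistent function each round, and Proposition~\ref{prop_size} together with integrality of $\ldim$ certifies the dimension constraint. The only cosmetic differences are that you pin down a concrete minimal-support extension where the paper says ``arbitrary function,'' and you explicitly spell out Adversary's moves after the $(2^{d+1}-1)$-st round, which the paper leaves implicit.
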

\begin{proof}
    Adversary picks any $n = 2^{d+1} - 1$ elements of the domain and gives them, in any order, to Learner. Each time Learner predicts some $\widehat{y_r}$, Adversary plays the opposite label $y_r = \lnot \widehat{y_r}$, and an \emph{arbitrary function} $f_r$, agreeing with the current history. As a result, Learner makes a mistake in each round. After $n$ rounds, we have $n = 2^{d+1} - 1$ mistakes, but the set of Adversary's function still has Littlestone dimension at most $d$. This is simply because there are at most $2^{d+1} - 1$ different functions so far, so the Littlestone dimension, by Proposition \ref{prop_size}, is bounded by $\log_2(2^{d+1} - 1) < d + 1$. 
\end{proof}

%Our algorithm will use oracle access to functions, provided by Adversary, rather than keeping the whole description of these functions (that might be infinite).

%In fact, our algorithm assumes a rather efficient model of interaction with consistency oracle that we call ``linear-time mistake-driven''. In this model, it is not necessary to run the consistency oracle every round and on all examples, revealed by Nature so far. Instead, it will be enough to run it only after rounds with mistakes, and only for examples from these rounds. More precisely, algorithms in this model maintain a sample $S = (x_1, y_1), \ldots, (x_e, y_e)$ of size, equal to the number of mistakes made so far. Whenever we make a mistake, we add a new pair to the end of $S$ and we run the consistency oracle on the updated $S$. As a result, we also keep an ordered list of functions $f_1, \ldots, f_e$, given to us by the consistency oracle after rounds with mistakes. Now, producing a prediction on a given $x\in X$ must take $O(e)$-time, and must involve evaluating $f_1, \ldots, f_e$ just on $x$.

%We present our algorithm in the next section.

\section{The algorithm}
In this section, we present our upper bound.
\label{sec_alg}
\begin{theorem}
\label{thm_main}
For every $d$,
    there exists an online learning algorithm in the consistent oracle model that for classes of Littlestone dimension $d$ makes at most $O(256^d)$ mistakes. 

Producing each prediction takes linear time in the number of mistakes made so far, assuming that it takes constant time to evaluate a function, coming from the consistent oracle, on a given $x\in X$.

\end{theorem}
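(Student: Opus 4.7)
The plan is to design a learner that maintains a subpool $S_r$ of functions returned by the consistent oracle in past rounds and predicts at round $r$ by taking an unweighted majority vote over $S_r$. The natural invariant I would enforce is that every $f \in S_r$ is consistent with all revealed labels $(x_1, y_1), \ldots, (x_{r-1}, y_{r-1})$; then, since $S_r$ is a subclass of the adversary's function collection, we have $\ldim(S_r) \le d$ throughout the play. On a mistake at round $r$, at least half of $S_r$ voted against $y_r$ and therefore becomes inconsistent with the new label, so these functions are dropped from $S_{r+1}$ while the fresh oracle response $f_r$ is added, giving $|S_{r+1}| \le |S_r|/2 + 1$.

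This halving estimate alone does not bound the number of mistakes, since $|S_r|$ can stabilise at a small value while the adversary continues forcing errors. To obtain an exponential-in-$d$ bound, I would couple the size decay with Proposition \ref{prop_restictions}: at a mistake round the split of $S_r \cup \{f_r\}$ along $x_r$ has both sides non-empty (the misvoting majority on one side, $f_r$ with the correctly voting minority on the other), so one of the two sides has Littlestone dimension at most $d - 1$. This suggests organising the algorithm into levels indexed by the current value of $\ldim(S_r)$: at level $k$, the learner runs the majority rule on a pool of Littlestone dimension at most $k$ until an accumulated sequence of mistakes witnesses a dimension drop, at which point it recursively enters level $k-1$. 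Writing $M(k)$ for the mistake budget at level $k$, the target recurrence is $M(k) \le C \cdot M(k-1)$ with $M(0) = O(1)$, yielding $M(d) = O(C^d)$; the claimed constant $C = 256 = 2^8$ would correspond to tolerating roughly eight halving and splitting steps per dimension drop.

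The main obstacle I foresee is that the side of the split the majority keeps need not coincide with the side of smaller Littlestone dimension, so a single mistake is not guaranteed to reduce the effective dimension. The learner must therefore absorb a short burst of mistakes at each level before a dimension drop can be certified, and the analysis hinges on designing the right potential function combining $|S_r|$, $\ldim(S_r)$, and the structure of the splits so that it strictly decreases on every mistake and is initially at most $O(256^d)$. The time complexity claim is immediate from the construction: a prediction evaluates the pool on $x_r$ in $O(|S_r|)$ time, and since a new function is added to the pool only on a mistake round, $|S_r|$ stays linear in the number of mistakes made so far.
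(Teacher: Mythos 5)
Your proposal shares some surface structure with the paper's proof — majority vote over oracle-supplied functions, a recursive level structure, and \Cref{prop_restictions} as the engine for dimension growth — but it differs from the paper at the critical point, and as written it does not close.

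The decisive issue is which half of the voting pool you retain after a mistake. You enforce the invariant that every function in $S_r$ remains consistent with all revealed labels, so on a mistake you \emph{discard} the misvoting majority and keep the minority that agreed with $y_r$. The paper does exactly the opposite: in \texttt{VoteAndUpdate}, it keeps the functions that agreed with the wrong prediction $\widehat{y}$ (i.e.\ the ones now inconsistent with the history) and discards the correct ones. This is not a cosmetic choice. Because the retained functions disagree with $y_r$ on $x_r$, while every function the oracle returns afterwards must agree with $y_r$ on $x_r$, the algorithm manufactures a fixed coordinate $x_r$ on which ``old'' and ``new'' pools disagree — this is precisely the non-empty $H_0$/$H_1$ split that \Cref{prop_restictions} needs to certify that the dimension went \emph{up}. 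With your invariant, the kept functions and all subsequent oracle responses all agree on $x_r$, so no such split is created, and there is nothing to feed into \Cref{prop_restictions}. You correctly diagnose that the pool then stabilises at a small size while the adversary keeps forcing errors, but your proposed remedy does not repair this, because the source of the problem is the choice of which half to keep.

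Two further gaps. First, you index levels by ``the current value of $\ldim(S_r)$'' and want to detect a ``dimension drop,'' but the learner in this model has no means to compute or even estimate $\ldim(S_r)$ — avoiding that is the whole point of the consistent-oracle setting. The paper's algorithm sidesteps this by running a \emph{fixed} schedule of \texttt{VoteAndUpdate} calls that never consults the dimension; the analysis then shows the schedule cannot run to completion over a class of Littlestone dimension $\le d$. Second, you explicitly leave open the choice of potential function, which is the heart of the argument. The paper's notion of a $\gamma$-advanced set — a set $T$ in which \emph{every} nonempty subset $A$ satisfies $\ldim(A)\ge\gamma+\log_{16}(|A|/|T|)$ — is exactly the strengthened inductive invariant that survives the halving-and-union steps; without something of this form the induction on $d$ does not go through, since the parts you union together are only half-sized subsets of what the previous level produced. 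Your time-complexity observation is fine, and the intuition of ``one mistake per halving, exponentially many halvings per dimension level'' points in the right direction, but the construction and the invariant you would need are materially different from what you wrote.
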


It is possible to give a single algorithm that does not depend on $d$ but makes $O(256^d)$ mistakes when run in the consistent oracle model for classes of Littlestone dimension $d$. We discuss this at the end of this section, where we give a simple non-recursive implementation of this algorithm.

For the sake of clarity, we first give a proof of  \Cref{thm_main} using a recursive version of this algorithm that receives $d$ on input. It allows a straightforward inductive proof of the claimed mistake bound. 

\begin{proof}[Proof of \Cref{thm_main}]
For our algorithm, it is not necessary to run the consistent oracle every round and on all examples, revealed by Adversary so far. Instead, it will be enough to run it only after rounds with mistakes, and only for examples from these rounds. More precisely, the algorithm will maintain a sample $S = (x_1, y_1), \ldots, (x_e, y_e)$, where $e$ is the number of mistakes so far. Whenever we make a mistake, we add a new pair to the end of $S$ and we run the consistent oracle on the updated $S$. As a result, we also keep an ordered list of functions $f_1, \ldots, f_e$, given to us by the consistent oracle after rounds with mistakes. To produce a prediction on a given $x\in X$, we just need to evaluate $f_1(x), \ldots, f_e(x)$, and then the prediction takes $O(e)$-time.

The goal of Learner is to guarantee that $\ldim\{f_1, \ldots, f_e\}$ grows as $\log_{256}(e) - \Omega(1)$. Then for classes of Littlestone dimension $d$, the number of mistakes will be bounded by $O(256^d)$.

For effective implementation of our algorithm, we will mark some functions among $f_1, \ldots, f_e$ as \emph{active}. Active functions will be kept in the ordered list. The size of the list will be denoted by $L$, and $g_i$ will denote the $(i+1)$st function in the current list (the first function in the list will be $g_0$). The algorithm will employ two operations with the list of active functions: (a) add a new function to the end of the list; (b) delete some functions from the list, without changing the order of the remaining ones. Both of them are easily realizable in linear time using the list data structure.  Note that after deletion, $g_i$ might refer to a different function.

\begin{algorithm}

\caption{\voteandupdate{k}}\label{alg:vad}

%\KwIn{sample $S$, list of active functions $\hat{F}=\{f_1,\ldots,f_L\}$, $k$}
receive $x\in X$ to predict\;

\bigskip

\eIf{$L \ge 2^k$}
{
predict $\widehat{y} = \mathsf{MAJORITY}(g_{L-2^k}(x), \ldots, g_{L - 1}(x))$\;
}
{
predict $\widehat{y} = 0$\;
}

\bigskip
receive $y\in\{0, 1\}$\;
\bigskip
\eIf{$y = \widehat{y}$}
{
    go to 1\;
    }
{
add $(x, y)$ to $S$\;
      \eIf{$k=0$ or $L < 2^k$}{
      $L:= L +1$\; 
      $g_L :=
      \cons(S)$\;}
      {

      among $g_{L - 2^k}, \ldots, g_{L - 1}$, select any $2^{k - 1}$, agreeing with $(x, \widehat{y})$, and delete the other $2^{k - 1}$\;
      }
      }
    \end{algorithm}

We will add a new active function only in the case when we used the last active function for prediction (that is, given $x\in X$, we predicted $g_{L - 1}(x)$) and we made a mistake. In this way, we ensure that the list of active functions never has repetitions. Indeed, consider a moment we put some function $g$  on the list. We show that we can never put the same function again. Indeed, currently, $g$ is the last one in the list. Whatever deletions we do, $g$ stays at the end of the list. Now, a new function comes when we made a prediction $\widehat{y} = g(x)$ on some $x$, and it was a mistake. At this moment, we add $(x, 1- \widehat{y})$ to $S$. This means that all future functions (including one that is about to be added) will be equal to $1 - \widehat{y}$ on $x$ as they come from the consistent oracle called on the updated $S$.

At each round of the game, the algorithm predicts according to the majority vote over the last $2^k$ active functions, for some $k\ge 0$. If there is no mistake, the algorithm keeps doing the same thing, without changing $k$ or the list of active functions. If a mistake happens, the algorithm modifies the list of active functions (and, potentially, $k$, according to the rules to be defined later). If $k = 0$ (when just the last active function is used for prediction), the algorithm runs the consistent oracle on $S$, updated after the new mistake, and adds the resulting function to the end of the list of active functions. If $k > 0$, the algorithm does not add a new active function, and this ensures, as we discuss below, that the list of active functions never has repetitions. Instead, when $k > 0$, deletes some active functions: among $2^k$ active functions it used for voting, it selects any $2^{k - 1}$ of them that agree with its prediction (and, thus, disagree with the label that Adversary gave to us after our prediction) and removes other $2^{k - 1}$ from the list of active functions. This is formally defined as a subroutine $\voteandupdate{k}$ in Algorithm \ref{alg:vad}.

    Technically, if there are not enough active functions to do the requested majority vote, our algorithm just predicts $\widehat{y} = 0$. We will make sure that this can only happen for $k = 0$. Let us also remark that we exit $\voteandupdate{k}$ exactly after 1 mistake since the start of the subroutine (and if we never make mistakes inside it, we never exit it).

    We now describe the recursive implementation of our algorithm. The idea is to construct a procedure that halts whenever $O(256^d)$ mistakes are made, and if it halts, the set of active functions is guaranteed to have the Littlestone dimension larger than $d$. Such a procedure cannot halt over a class of Littlestone dimension $d$ (all active functions come from the class) and hence it makes less than $O(256^d)$ mistakes.
    
    Ideally, to facilitate an inductive proof, the algorithm, given parameter $d$, would make a constant number of recursive calls to its instance with parameter $d-1$ and so on. However, to make the induction work, we need a stronger property than just ``having Littlestone dimension $> d$''. This property is given in the next definition.
    
   % This carefully chosen property---which we call being $\gamma$-advanced---of finite classes of functions will imply a large Littlestone dimension. We formulate this property in the next definition. We will then describe a recursive algorithm \createadv{d} which, if halts, produces a set of active functions that are $(1+d/2)$-advanced.

\begin{definition}
    Take any real $\gamma\ge 1$.
A non-empty set $T\subseteq \{0, 1\}^X$ is \mbox{\textbf{$\gamma$-advanced}} if for every non-empty $A\subseteq T$ we have
\[\ldim(A) \ge \gamma + \log_{16}\left(\frac{|A|}{|T|}\right).\]
\end{definition}
By definition, any $\gamma$-advanced set has Littlestone dimension at least $\gamma$ (use the definition for $A = T$). Correspondingly, our plan is to write a subroutine $\createadv{k}$ (see Algorithm \ref{alg:createadv}) that, if it halts, ``creates'' a $(1+k/2)$-advanced set of active functions. For classes of Littlestone dimension $d$, it will be enough to run $\createadv{2d - 1}$.

%Before describing the subroutine \createadv{d}, let us make several comments on this definition. First, we note that this definition is for $\varepsilon > 0$, which means that there we only consider non-empty subsets of $S$, on which the Littlestone dimension is well-defined.

%We also stress that for some small $\varepsilon > 0$, the quantity $\gamma + \log_{16}(\varepsilon)$ might be negative, and then the corresponding inequality on the Littlestone dimension is automatically satisfied. Still, for technical simplicity, we state the definition for all $\varepsilon > 0$.

%Finally, let us give an example of $\gamma$-advanced sets. Namely, assume that $X = \{1, \ldots, n\}$ and consider \emph{threshold functions} on $X$, that is, function of the form $f_i(x) = \mathbb{I}\{ x\le i\}$. It is well-known (see, for instance, Theorem 3 in~\cite{alon2019private}) that any set of $m$ different threshold functions has Littlestone dimension at least $\lfloor \log_2 m\rfloor$ (the idea is that it is possible to shatter a ``binary search tree'' of depth $\log_2 m$). This implies that any set $S$ of thresholds is $(\log_{16}(|S|)-1)$-advanced. Indeed, take any non-empty subset $A\subseteq S$. Since $A$ is itself a set of $|A|$ different threshold, we obtain:
%\begin{align*}
 %   \ldim(A) &\ge \lfloor \log_2(|A|)\rfloor \ge \log_{2}(|A|) - 1 \\
  %  &= 4\log_{16}(|A|) - 1 \ge \log_{16}(|A|)  - 1 = (\log_{16}(|S|) - 1) + \log_{16}\left(\frac{|A|}{|S|}\right).
%\end{align*}
% The subroutine \createadv{k} for $k\ge 0$ is defined as follows.
\begin{algorithm}
\caption{\createadv{k}}\label{alg:createadv}
\eIf{$k=0$}{
      \For{$i:=1$ \KwTo $16$}{
      \voteandupdate{0}
      }
      }
    {
    \For{$i:=1$ \KwTo $16$}{
    \createadv{k-1}\;
      \voteandupdate{3k + 1}}
    }
\end{algorithm}

The following proposition summarizes the properties of $\createadv{k}$.
\begin{prop}
\label{prop_create}
    For every $k \ge 0$, procedure \createadv{k} halts exactly after $R_k = 16 + 16^2 +\ldots + 16^{k + 1}$ mistakes since the start of the procedure. Moreover, if it halts, it attaches to the list of active function exactly $2\cdot 8^{k +1}$ new active functions, forming a $(1+k/2)$-advanced set (and it does not delete active functions that were in the list before the start of the procedure).
\end{prop}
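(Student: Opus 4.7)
I would prove all three assertions jointly by induction on $k$. The base case $k = 0$ is direct: each of the sixteen calls to $\voteandupdate{0}$ inside $\createadv{0}$ exits after exactly one mistake and appends one function from $\cons$, yielding $R_0 = 16$ mistakes and $16 = 2 \cdot 8$ new functions, and the appended functions are pairwise distinct (as already argued in the text), so any non-empty $A \subseteq T_0$ with $|A| \ge 2$ has $\ldim(A) \ge 1 \ge \log_{16}|A|$, giving $1$-advancedness. For the inductive step $k \ge 1$, the mistake and attachment counts are bookkeeping: the $i$th outer iteration first appends a $(1 + (k-1)/2)$-advanced set $S_i$ of $2 \cdot 8^k$ new functions via the internal $\createadv{k-1}$ (inductive hypothesis, $R_{k-1}$ mistakes), and then $\voteandupdate{3k+1}$ consumes one further mistake and retains a subset $T_k^{(i)} \subseteq S_i$ of size $8^k$; summing over sixteen iterations gives $R_k$ mistakes and $2 \cdot 8^{k+1}$ new functions, and no earlier active function is removed.

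The substance is the $(1+k/2)$-advancedness of $T_k := T_k^{(1)} \cup \cdots \cup T_k^{(16)}$, which after simplifying $(1 + k/2) + \log_{16}(|A|/|T_k|)$ using $|T_k| = 16 \cdot 8^k$ is equivalent to $\ldim(A) \ge \log_{16}|A| - k/4$ for every non-empty $A \subseteq T_k$. I would rely on two preliminaries: \emph{halving}, that each $T_k^{(i)}$ is a half-sized subset of the $(1+(k-1)/2)$-advanced $S_i$ and hence $(3/4 + (k-1)/2)$-advanced, giving $\ldim(A') \ge \log_{16}|A'| + 1/4 - k/4$ for $A' \subseteq T_k^{(i)}$ (a quarter of a unit of slack over the target); and \emph{structure}, that every $f \in T_k^{(i)}$ satisfies $f(x_i) = 1 - y_i$ (by the retention rule of $\voteandupdate{3k+1}$, which keeps functions agreeing with the incorrect majority prediction), while every $f \in T_k^{(j)}$ with $j > i$ satisfies $f(x_i) = y_i$ (since every $\cons$ query made in iteration $j$ uses a sample already containing the pair $(x_i, y_i)$).

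With these in hand, I would prove $\ldim(A) \ge \log_{16}|A| - k/4$ by a nested induction on the number $n$ of iterations in which $A$ has non-empty intersection. The base $n = 1$ follows immediately from halving. For the step $n \ge 2$, letting $i_1$ be the smallest active iteration, structure makes the split of $A$ along $x_{i_1}$ clean (the $(1-y_{i_1})$-side is exactly $A \cap T_k^{(i_1)}$ and the $y_{i_1}$-side is exactly $A \setminus T_k^{(i_1)}$, which has $n - 1$ active iterations), and combining \Cref{prop_restictions} with halving on one side and the nested IH on the other yields
\[
\ldim(A) \ge \min\bigl(\log_{16}(a_{i_1}) + 5/4,\ \log_{16}(|A| - a_{i_1}) + 1\bigr) - k/4,
\]
where $a_{i_1} = |A \cap T_k^{(i_1)}|$. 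This beats the target in a moderate window of $a_{i_1}/|A|$; the extreme regimes have to be handled by the trivial monotonicity $\ldim(A) \ge \ldim(B)$ applied with $B = A \setminus T_k^{(i_1)}$ (via the nested IH) for tiny $a_{i_1}$, and with $B = A \cap T_k^{(i^*)}$ for the iteration $i^*$ of largest intersection (via halving) when one iteration dominates $A$. The main technical obstacle is precisely this case analysis: verifying that the \Cref{prop_restictions} window, the ``drop a tiny smallest iteration'' window, and the ``reduce to a dominant single iteration'' window together cover $a_{i_1}/|A| \in [0, 1]$ with no gap at every step of the nested induction, which is exactly what the algorithm's choice of sixteen outer iterations and voting size $2^{3k+1}$ is tuned to make possible.
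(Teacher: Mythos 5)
Your proof attempt follows the same outer structure as the paper's (induction on $k$, with identical base case and bookkeeping for the mistake and function counts, and the same two preliminary facts about halving and the separation structure along $x_i$), but for the crux --- proving $T = \Gamma_1\cup\cdots\cup\Gamma_{16}$ is $(1+k/2)$-advanced --- you go a genuinely different way and, as written, it has a real gap. Your plan is a nested induction on the number $n$ of blocks $\Gamma_i$ that $A$ meets, splitting at $x_{i_1}$ where $i_1$ is the smallest active index, and then partitioning the parameter $a_{i_1}/|A|$ into a ``\Cref{prop_restictions} window'', a ``drop the tiny smallest block'' window, and a ``dominant single block'' window. The trouble is that the drop regime simply does not deliver the target: $\ldim(A)\ge\ldim(A\setminus\Gamma_{i_1})$ combined with the nested inductive hypothesis gives $\log_{16}(|A|-a_{i_1})-k/4$, which is \emph{strictly less} than $\log_{16}|A|-k/4$ whenever $a_{i_1}>0$. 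The only source of slack is the $+1/4$ coming from the halving bound, and that slack is attached to a single block, not to the nested IH. So a chain of drops accumulates loss that can only be repaid if one eventually lands in the dominant-block regime with enough mass left; whether this happens for every distribution of $(a_1,\ldots,a_{16})$ is exactly the case analysis you flag as ``the main technical obstacle'' and leave unverified. As stated the three windows do not visibly cover: for example, if $a_{i_1}$ is tiny and the remaining mass is spread nearly evenly over many blocks, you are not in the \Cref{prop_restictions} window, not in the dominant window, and the drop step loses ground with no clear way to recover it at the next level of the nested induction.

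The paper sidesteps the nested induction entirely with a flat two-case argument driven by the averaging identity $\alpha=(\alpha_1+\cdots+\alpha_{16})/16$, where $\alpha=|A|/|T|$ and $\alpha_i=|A\cap\Gamma_i|/|\Gamma_i|$. If some $\alpha_i\ge 8\alpha$ (equivalently, one block carries at least half of $A$), a single application of the $(1+(k-1)/2)$-advancedness of $T_i$ already yields the target, because $\alpha_i/2\ge 4\alpha$ gives $\log_{16}(\alpha_i/2)\ge \log_{16}\alpha+1/2$. If no $\alpha_i\ge 8\alpha$, an elementary pigeonhole on the sixteen numbers shows the two largest among $\alpha_1,\ldots,\alpha_{16}$ are both at least $\alpha/2$; one application of \Cref{prop_restictions} at $x_{\min(i,j)}$ (using the clean separation you also noticed) plus the IH on each of those two blocks finishes the proof. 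No recursion on $n$, no accumulation of loss, no coverage lemma to verify. To repair your attempt you would essentially have to rediscover this dichotomy: show that failing the single-dominant-block condition (one $a_i\ge|A|/2$) forces \emph{two} blocks to have $a_i\ge|A|/32$, and then use \Cref{prop_restictions} once rather than recursively.
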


This proposition implies \Cref{thm_main}. Namely, we claim that the procedure \createadv{2d - 1} is an online learning algorithm in the consistent oracle model that for classes of Littlestone dimension $d$ makes $O(256^d)$ mistakes. Indeed, the algorithm \createadv{2d - 1} halts after precisely $16 + 16^2 + \ldots + 16^{2d}$ mistakes. On the other hand, it cannot halt when run it in the consistent oracle model for Littlestone dimension $d$. This is because it can only halt when there is a $(1 + (2d - 1)/2) = (d+1/2)$-advanced set of functions, coming from the consistent oracle, and the Littlestone dimension of a $(d+1/2)$-advanced set is larger than $d$. Thus, \createadv{2d - 1} cannot make $16 + 16^2 + \ldots +16^{2d} = O(256^d)$ mistakes in the consistent oracle model for Littlestone dimension $d$.
It remains to prove \Cref{prop_create}.
\begin{proof}[Proof of Proposition \ref{prop_create}]
    We prove this proposition by induction on $k$. We start with the mistake bound. For $k = 0$, the procedure \createadv{k} executes \voteandupdate{0} 16 times. Each \voteandupdate{0} halts exactly after 1 mistake, so if all of them halt, we had exactly $R_0 = 16$ mistakes. We now establish the inductive step. Assume that the mistake bound is proved for \createadv{k - 1}. Now, \createadv{k} runs 16 times \createadv{k - 1} and 16 times \voteandupdate{3k + 1}. Thus, overall, it halts after $16(R_{k - 1} + 1) = 16 (1 + 16 + \ldots 16^k) = R_k$ mistakes, as required.

    We now establish the second part of the proposition. We start with the induction base. We have to show that \createadv{0}, if it halts, attaches 16 new active functions, forming a 1-advanced set. Note that \voteandupdate{0} always adds a new active function without touching the previous ones. Thus, \createadv{0} halts by adding 16 new active functions. As we remarked in the beginning of the description of our algorithm, the list of active functions never has repetitions. Therefore, it remains to establish that any set of 16 different functions is 1-advanced.
    \begin{lemma}
        Any $T\subseteq\{0, 1\}^X$ of size $16$ is 1-advanced.
    \end{lemma}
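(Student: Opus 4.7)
The plan is to unfold the definition of $1$-advanced and verify the required inequality for every non-empty $A\subseteq T$. Since $|T| = 16$, the claim $\ldim(A) \ge 1 + \log_{16}(|A|/|T|)$ simplifies to $\ldim(A) \ge \log_{16}(|A|)$, and because $|A|\le 16$, the right-hand side is always at most $1$. So the whole inequality reduces to showing that this very mild bound on $\ldim(A)$ holds.

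I would split into two cases according to the size of $A$. If $|A| = 1$, then $\ldim(A) = 0$ by the convention stated in the preliminaries (a single-function class), while $\log_{16}(1) = 0$, so the inequality holds with equality. If $|A| \ge 2$, pick two distinct functions $f, g \in A$; by distinctness there is some $x \in X$ with $f(x) \ne g(x)$, and the depth-$1$ labeled tree with root labeled $x$ is shattered by $A$ (one leaf witnessed by $f$, the other by $g$). Hence $\ldim(A) \ge 1 \ge \log_{16}(|A|)$, using $|A| \le 16$.

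Combining the two cases covers all non-empty $A \subseteq T$ and establishes that $T$ is $1$-advanced. There is essentially no obstacle here: the estimate is loose and both cases are immediate from the basic properties of the Littlestone dimension noted in \Cref{sec_prel}. The only thing to be careful about is remembering that $T$ consists of $16$ \emph{distinct} functions (this is used in the $|A|\ge 2$ case to guarantee a coordinate where two members of $A$ disagree) and that $\ldim$ of a singleton is $0$ by convention (which makes the $|A|=1$ case tight rather than vacuous).
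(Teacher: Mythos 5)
Your proof is correct and follows essentially the same two-case split as the paper: $|A|=1$ gives $\ldim(A)=0$ matching the bound with equality, and $|A|\ge 2$ gives $\ldim(A)\ge 1$ which dominates the (at most $1$) right-hand side. The only difference is cosmetic — you simplify $1+\log_{16}(|A|/16)$ to $\log_{16}|A|$ first and spell out why two distinct functions force $\ldim\ge 1$, steps the paper leaves implicit.
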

    \begin{proof}
        Take any non-empty $A\subseteq T$. If $|A| = 1$, then $\ldim(A) = 0 = 1 + \log_{16}(|A|/|T|)$, as required. If $|A| \ge 2$, then $\ldim(A)\ge 1$, meaning that $\ldim(A) = 1 \ge 1 + \log_{16}(|A|/|T|)$, as required.
    \end{proof}
    We are now proving the induction step. Assuming that the statement is proved for $k - 1$, we establish it for $k$. Consider any run of \createadv{k} that halts. At the beginning, it runs \createadv{k - 1}. By the induction hypothesis, if it halts, it attaches a  $(1 + (k - 1)/2)$-advanced set $T_1\subseteq \{0, 1\}^X$ of size $2\cdot 8^k$ to the list of active functions. Then we run $\voteandupdate{3k + 1}$. At this moment, for prediction we use the majority vote over the last $2^{3k +1} = 2\cdot 8^k$ active functions, that is, over $T_1$. We exit $\voteandupdate{3k +1}$ when our majority vote made a mistake on some $x = x_1$. More specifically, the majority of functions from $T_1$ were equal to some $\widehat{y}_1\in\{0,1\}$ on $x_1$, and then we obtained the opposite label $y_1 = 1 - \widehat{y}_1$ from Adversary. We choose some subset $\Gamma_1 \subseteq T_1$ of size $(1/2) |T_1| = 8^k$ where all functions are equal to $\widehat{y}_1$ on $x_1$, and we delete the rest of functions of $T_1$ from the list of active function. At this point, the pair $(x_1, y_1) = (x_1, 1 - \widehat{y}_1)$ is added to the sample on which we call the consistent oracle, meaning that all new active functions will be equal to $1 - \widehat{y}_1$ on $x_1$, opposite to functions from $\Gamma_1$.

    We then repeat \createadv{k - 1} and \voteandupdate{3k + 1} 15 more times. Each time, a set $\Gamma_i$ with the same properties as $\Gamma_1$ is attached to the list of active functions. More precisely, \createadv{k} ends up attaching $\Gamma_1, \ldots, \Gamma_{16}$, where for every $i = 1, \ldots, 16$ we have the following:
    \begin{itemize}
        \item $\Gamma_i$ is a set of functions of size $8^k$, which is a subset of some $(1 + (k - 1)/2)$-advanced set $T_i$ of size $2 \cdot 8^k$;
        \item there exists $x_i\in X$ and $\widehat{y}_i\in\{0, 1\}$ such that, first, all functions from $\Gamma_i$ are equal to $\widehat{y}_i$ on $x_i$, and second, all functions from $\Gamma_{i+1}\cup\ldots \cup\Gamma_{16}$ are equal to $1 - \widehat{y}_i$ on $x_i$.
    \end{itemize}
    Overall, we attached $16\cdot 8^k = 2 \cdot 8^{k +1}$ new active functions. It remains to show that $T = \Gamma_1 \cup \ldots \cup \Gamma_{16}$ is $(1+ k/2)$-advanced.

    \bigskip

    Take any non-empty subset $A\subseteq T$. Our goal is to show that $\ldim(A) \ge 1 + k/2 + \log_{16}\left(\frac{|A|}{|T|}\right)$. Once again, the list of active functions never has repetitions, which means that $\Gamma_1, \ldots, \Gamma_{16}$ are disjoint. In particular, $A$ is a disjoint union of $\Gamma_1 \cap A, \ldots, \Gamma_{16}\cap A$. Denote
    \[\alpha = \frac{|A|}{|T|}, \qquad \alpha_i =  \frac{|A\cap \Gamma_i|}{|\Gamma_i|}.\]
    Note that our goal is to show that $\ldim(A) \ge 1 + k/2 + \log_{16}(\alpha)$.
    Since $A$ is non-empty, we have $\alpha > 0$.
    Each $\Gamma_i$ is 16 times smaller than $T$, which means that  
    \begin{equation}
    \label{eq_sum}
       \alpha = \frac{|\Gamma_1 \cap A| + \ldots + |\Gamma_{16} \cap A|}{|T|} =  \frac{\alpha_1 + \ldots +\alpha_{16}}{16}.
    \end{equation}
    First, consider the case when there exists $i = 1, \ldots, 16$ such that $\alpha_i\ge 8\alpha$. We will bound $\ldim(A)$ from below by just $\ldim(A\cap \Gamma_i)$, using the fact that $A\cap\Gamma_i\subseteq \Gamma_i\subseteq T_i$ is a sufficiently large subset of a $(1+(k-1)/2)$-advanced set $T_i$. Moreover, the improvement of the parameter will be achieved due to the fact that $A\cap\Gamma_i$ is 4 times larger w.r.t.~$T_i$ than $A$ w.r.t.~$T$.

    First of all, $A\cap \Gamma_i$ is non-empty because its size is the $\alpha_i$-fraction of the size of $\Gamma_i$, and $\alpha_i \ge 8\alpha > 0$. Again, the size of $A\cap \Gamma_i$ is the $\alpha_i$-fraction of  $|\Gamma_i|$ while the latter is half of $|T_i|$. Hence, the size of $A\cap \Gamma_i$ is the $\alpha_i/2$-fraction of $|T_i|$.
     Applying the definition for $T_i$, we get:
    \begin{align*}
        \ldim(A) &\ge \ldim(A\cap \Gamma_i) \\
        &\ge (1 + (k -1)/2) + \log_{16}(\alpha_i/2)\\
        &\ge  (1 + (k -1)/2) + \log_{16}(4\alpha) \\
        &= 1 + k/2 + \log_{16}(\alpha),
        \end{align*}
as required.

     Now, consider the case when $\alpha_i < 8\alpha$ for every $i = 1, \ldots, 16$. This time, we will bound $\ldim(A)$ from below using Proposition \ref{prop_restictions}, showing that as long as for some $x\in X$, both subsets $A_0 =\{f\in A\mid f(x) = 0\}$ and $A_1 = \{f\in A\mid f(x) = 1\}$ have Littlestone dimension at least $d$, the set $A$ itself has Littlestone dimension at least $d+1$. A crucial observation here is that for any $i < j$, we have that all functions from $A\cap \Gamma_i$ are equal to $\widehat{y}_i$ on $x_i$ while all functions from $A\cap \Gamma_j$ are equal to $1 - \widehat{y}_i$ on $x_i$. By Proposition \ref{prop_restictions}, for every $i < j$, as long as both $A\cap \Gamma_i, A\cap \Gamma_j$ are non-empty, we get:
     \[\ldim(A) \ge \min\{\ldim(A\cap \Gamma_i), \ldim(A\cap \Gamma_j)\} + 1.\]
Observe that we get the required bound $\ldim(A) \ge 1 + k/2 + \log_{16}(\alpha)$ as long as we have:
\[\ldim(A\cap \Gamma_i) \ge  k/2 + \log_{16}(\alpha)\]
for at least two different $i$.

Once again, $\ldim(A\cap \Gamma_i)$ can be bounded from below using the fact that $A\cap \Gamma_i$ is a subset of $T_i$ whose size is the $(\alpha_i/2)$-fraction of $|T_i|$. For every $i$, as long as $\alpha_i > 0$, we get
\begin{align*}
    \ldim(A\cap\Gamma_i) &\ge (1 + (k - 1)/2) + \log_{16}(\alpha_i/2) \\&= k/2 + \log_{16}(2\alpha_i).
\end{align*}
Thus, our job is done as long as $\alpha_i \ge \alpha/2$ for at least 2 different $i$. To show this, among  $\alpha_1, \ldots, \alpha_{16}$, take two largest numbers. Namely, let it be $\alpha_i \ge \alpha_j$ for some $i\neq j$. We claim that $\alpha_i \ge \alpha_j \ge \alpha/2$. Indeed, note that the sum $\alpha_1 + \ldots + \alpha_{16}$ can be bounded from above by $16 \alpha_j + \alpha_i$ (all numbers, except $\alpha_i$, are at most $\alpha_j$). Thus, from \eqref{eq_sum}, we get:
    \[\alpha_j + \frac{\alpha_i}{16}\ge \alpha.\]
    Remembering that $\alpha_i < 8\alpha$, we obtain $\alpha_j \ge \alpha/2$.
\end{proof}
\end{proof}
\subsection*{Non-recursive implementation which is independent of $\mathbf{d}$.}
Finally, we give a simple non-recursive implementation of our algorithm. Let us first observe that it is possible to give an implementation that does not depend on $d$. Indeed, for every $k\ge 0$, if we get rid of the recursion, the procedure \createadv{k} makes calls the VoteAndUpdate procedures in some fixed order. Moreover, \createadv{k} starts with a call to \createadv{k - 1}, which in turn starts with a call to \createadv{k - 2}, and so on. This means that there exists an infinite sequence of the VoteAndUpdate procedures such that, for every $k\ge 0$, some prefix of this sequence is a realization of \createadv{k}.

It remains to explicitly define this sequence.
For a moment, 
\begin{align*}
    \proc{0} &= \voteandupdate{0},\\
    \proc{k} &= \voteandupdate{3k +1},\,\, \text{for } k\ge1.
\end{align*}
 Note that \createadv{0} consists of 16 repetitions of \proc{0}, while \createadv{k} for $k>0$ consists of 16 repetitions of \createadv{k-1}; \proc{k}.

For $k > 0$,
we call \proc{k} always just after executing \createadv{k - 1}. Now, each call to   \createadv{k - 1} has exactly 16 calls to \proc{k - 1}, with the last one of these 16 calls happening right at the end of \createadv{k - 1}. In other words, for $k > 0$, the call to \proc{k} happens exactly at moments when the last called procedure is \proc{k-1}, and the total number of calls to \proc{k-1} is a multiple of 16.

This gives the following rule. If the last procedure so far is \proc{k} for some $k\ge 0$, then if the total number of calls to \proc{k} is a multiple of 16, we run \proc{k+1}, otherwise, we run \proc{0}.

This order is modeled by Algorithm \ref{alg:predict}.
\begin{algorithm}
\caption{\predict}\label{alg:predict}
$N$ := $1$\;
$i$ := maximal $i\ge 0$ such that $16^i$ divides $N$\;
run \proc{0} = \voteandupdate{0}\;
\For{$j:=1$ \KwTo$i$}
{run \proc{j} = \voteandupdate{3j+1}\;}
$N:=N+1$\;
go to 2\;
\end{algorithm}

In this algorithm, we run \proc{0} for all $N$, \proc{1} for all $N$ that are multiples of $16$, \proc{2} for all $N$ that are multiples of $16^2$, and so on. In this way, every call to \proc{k} is preceded by (and follows right after)  exactly 16 calls to \proc{k-1}, as required.

\section{The $3^d$ lower bound}
\label{sec_lower}
\begin{theorem}
For every $d\ge 1$,
    there exists no online learning algorithm in the consistent oracle model for classes of Littlestone dimension $d$ that makes less than $3^d$ mistakes.
\end{theorem}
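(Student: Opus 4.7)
The proof proceeds by induction on $d$.

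For the base case $d = 1$, the adversary queries three distinct fresh domain elements $x_1, x_2, x_3$ and in each round responds with $y_r = 1 - \hat{y}_r$, committing some function $f_r$ consistent with the partial labeling so far (such $f_r$ exists since the queries are distinct and any labeling of three distinct points is realizable by a total function). This yields 3 mistakes, and by Proposition~\ref{prop_size} the three committed functions have Littlestone dimension at most $\log_2 3 < 2$, hence at most $1$.

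For the inductive step, assume an adversary strategy $\sigma_{d-1}$ forces $3^{d-1}$ mistakes with committed set of Ldim at most $d-1$. The goal is to build $\sigma_d$ forcing $3 \cdot 3^{d-1} = 3^d$ mistakes with Ldim at most $d$. The natural attempt is to run $\sigma_{d-1}$ three times on fresh disjoint subdomains $R_1, R_2, R_3$, separated by a fresh ``bridge'' coordinate $x^*$, with the first two phases' functions satisfying $f(x^*) = 0$ and the third phase's functions satisfying $f(x^*) = 1$. The mistake count then matches, but splitting the combined set along $x^*$ gives Ldim at most $\max(\mathsf{Ldim}(F_1 \cup F_2), \mathsf{Ldim}(F_3)) + 1$, which can be $d+1$ rather than $d$, since $\mathsf{Ldim}(F_1 \cup F_2)$ may already be as large as $d$.

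To close this one-dimension gap, I would arrange the third phase to simulate $\sigma_{d-1}$ purely by \emph{reusing} functions already committed in the first two phases, exploiting a ``free mistake'' mechanism: whenever the pool of functions in $F_1 \cup F_2$ consistent with the current history contains two functions differing on the next queried element, the adversary can commit an opposite-value function already in $F_1 \cup F_2$ without enlarging the committed set. Under this plan the total committed set is just $F_1 \cup F_2$, whose Ldim is at most $d$ by a single split along a fresh element separating $F_1$ from $F_2$ together with Proposition~\ref{prop_restictions}.

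The main obstacle is to show that after the first two runs, the active pool is diverse enough to host a full $\sigma_{d-1}$-like schedule via reuse alone. Formalizing this will likely require strengthening the inductive hypothesis to track not just the mistake count but also the structural richness of the surviving pool at the end of each phase — e.g., that $\sigma_{d-1}$ can be arranged so its output functions, extended to a designated future subdomain, retain enough splitting points there to emulate another $\sigma_{d-1}$ instance. This structural invariant, threaded through the induction, is where I expect the bulk of the technical work.
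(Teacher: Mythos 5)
Your proposal is an inductive outline, not a proof: you explicitly flag the inductive step as unfinished and say the ``structural invariant, threaded through the induction, is where I expect the bulk of the technical work.'' That missing piece is precisely the hard part, and as sketched there are concrete obstacles. First, the ``reuse'' plan requires functions from $F_1\cup F_2$ to remain legal moves for Adversary in phase~3, but the rules demand that $f_r$ agree with \emph{all} labels $y_1,\ldots,y_r$ announced so far; a function committed in phase~1 was fixed before the (Learner-dependent) labels of phase~2 were decided, so there is no reason it is still consistent with the history when phase~3 begins. Second, the bound $\mathsf{Ldim}(F_1\cup F_2)\le d$ that you invoke via ``a single split along a fresh element together with Proposition~\ref{prop_restictions}'' does not follow: that proposition is a \emph{lower} bound on Littlestone dimension, and the matching upper bound $\mathsf{Ldim}(H)\le\max\{\mathsf{Ldim}(H_0),\mathsf{Ldim}(H_1)\}+1$ for a split by a single coordinate is in fact false in general. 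For instance, with $H_0=\{0000,0100,0110,0111\}$ and $H_1=\{1000,1100,1110,1111\}$ on coordinates $(x,z_1,z_2,z_3)$, both halves have Littlestone dimension $1$, yet the tree with root $z_2$, children $z_1$ and $z_3$, and all grandchildren $x$ is shattered by $H_0\cup H_1$, so $\mathsf{Ldim}(H_0\cup H_1)\ge 3$. So even if you patched the reuse mechanism, the dimension accounting would need a different argument.

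The paper avoids induction entirely. Adversary queries $0,1,\ldots,3^d-1$ in increasing order, always answers with the flipped label, and commits the explicit function $f_r$ defined on $x>r$ by $f_r(x)=r_i$, where $i$ is the most significant ternary digit on which $r$ and $x$ differ (and $f_r(x)=0$ for $x\ge 3^d$). This forces $3^d$ mistakes by construction, and the bound $\mathsf{Ldim}(\{f_0,\ldots,f_{3^d-1}\})\le d$ is shown by exhibiting a concrete online learner for that class making at most $d$ mistakes, tracking one more significant ternary digit of the objective index $r$ per mistake, and then invoking Theorem~\ref{thm:little}. The explicit construction sidesteps both of the gaps above: there is no phase structure and no need to compose Littlestone bounds across subclasses.
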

\begin{proof}
\textbf{Notation}.
Without loss of generality, we assume that the domain $X$ is equal to $\mathbb{N}$. In the proof, we identify numbers from $\{0, 1, \ldots, 3^d - 1\}$ with strings from $\{0, 1, 2\}^d$ by assigning each   $x\in\{0, 1, \ldots, 3^d - 1\}$ the unique ternary string $x_{d - 1}\ldots x_1 x_0\in\{0, 1, 2\}^d$ of length $d$ such that:
\[x = x_{d - 1} \cdot 3^{d - 1} + \ldots + x_1\cdot 3 + x_0.\]
In other words, $x_{d-1}\ldots x_1 x_0$ is the ternary expansion of $x$, where $x_i$ refers to the digit before $3^i$.
\bigskip

\textbf{The strategy of Adversary.}
We give a strategy of Adversary, which uses a set of functions of the Littlestone dimension at most $d$ and forces the Learner to make at least $3^d$ mistakes. 

The Adversary will be playing numbers $0, 1, \ldots, 3^d - 1$ in the increasing order. When Learner receives $r\in\{0, 1, \ldots, 3^d - 1\}$, it predicts
    some $\widehat{y_r}\in\{0, 1\}$. Adversary always plays the opposite label $y_r = \lnot \widehat{y_r}$. Adversary also has to give some function $f_r\colon\mathbb{N}\to\{0,1\}$, satisfying:
    \begin{equation}
    \label{y_x}
        f_r(0) = y_0, \ldots, f_r(r) = y_r.
    \end{equation}
The function $f_r$ is defined as follows. We only have to define $f_r(x)$ for $x > r$. If $x \ge 3^d$, we set $f_r(x) = 0$. Now, consider the case when $0 \le r < x \le 3^d - 1$. Let $r_{d - 1}\ldots r_0$ and $x_{d - 1}\ldots x_0$ be ternary expansions of $r$ and $x$, respectively. Consider the largest $i$ such that $r_i \neq x_i$. In other words, $i$ refers to the most significant position in the ternary expansions of $r$ and $x$ where they differ. Since $r < x$, we have that $r_i < x_i$, meaning that $r_i\in\{0, 1\}$. Set $f_r(x) = r_i$.

    This strategy forces Learner to make $3^d$ mistakes. It remains to show that $\ldim(\{f_0, \ldots, f_{3^d - 1}\})\le d$. 
    For that, we give an online learning algorithm for the class $H = \{f_0, \ldots, f_{3^d - 1}\}$ that makes at most $d$ mistakes. By Theorem \ref{thm:little}, this implies the upper bound  $\ldim(\{f_0, \ldots, f_{3^d - 1}\}) \le d$.

    \bigskip

    \textbf{An online learning algorithm for $H = \{f_0, \ldots, f_{3^d - 1}\}$ with at most $d$ mistakes.}
    In the description of the algorithm, we only consider inputs $x\in\{0, 1, \ldots, 3^d - 1\}$, because on all $x\ge 3^d$, all functions from the class are equal to $0$.
    
    Assume that the true ``objective function'' used by the Adversary is $f_r\in H$ for some $r \in 0, 1, \ldots, 3^d - 1$. Of course, this $r$ is not known to the learning algorithm in the beginning. Let $r_{d - 1}\ldots r_0$ be the ternary expansion of this $r$. Before proceeding, let us introduce an auxilliary definition.
    \begin{definition}
    Let $\ell\in\{1, \ldots, d\}$.
        A number $x\in\{0, 1, \ldots, 3^d - 1\}$ is \textbf{$\ell$-informative} for $r\in\{0, 1, \ldots, 3^d - 1\}$ if $(l - 1)$ most significant digits in the ternary expansion of $x$ and $r$ coincide, and $f_r(x) \neq y_x$ (note that for $\ell = 1$, the first condition is vacuous so we are only left with the condition $f_r(x) \neq y_x$ in this case).
    \end{definition}
The algorithm maintains the following invariant for $\ell = 1, \ldots, d$: after $\ell$ mistakes, the algorithm knows some $x\in\{0, 1,\ldots, 3^d - 1\}$ which is $\ell$-informative for $r$.

In the beginning, until the first mistake, the algorithm predicts $y_x$ for every $x\in\{0, 1, \ldots, 3^d-1\}$. Any $x$ that causes the first mistake will be $1$-informative for $r$, fulfilling the invariant for $\ell = 1$. 

\bigskip
Now, observe that we can compute $r$ from any $x\in\{0, 1, \ldots, 3^d - 1\}$ which is $d$-informative for $r$. This will mean that after $d$ mistakes, our algorithm is able to identify the objective function exactly, thus making at most $d$ mistakes.

By definition, $d - 1$ most significant digits of $r$ and $x$ coincide. This means that the ternary expansion of $x$ is $r_{d -1}\ldots r_1 a$ for some $a\in\{0, 1, 2\}$. We need to figure out the least significant digit of $r$, that is, $r_0$. We already know that $f_r(x) \neq y_x$. This means that $r < x$ because by definition, $f_r(x) = y_x$ for all $x\le r$. In particular, $r_0 < a$. Hence, the case $a = 0$ is impossible, and  if $a = 1$, then $r_0 = 0$. Now, if $a = 2$, then $r_0\in\{0, 1\}$. Additionally, by definition of $f_r$, we have $f_r(x) = r_0$. Knowing that $f_r(x)\neq y_x$, we conlude that $r_0 = \lnot y_x$.

\bigskip

It remains to show how to maintain the invariant. More specifically, for every $\ell = 1, 2, \ldots, d - 1$ we show the following. Assume that the algorithm knows some $x\in\{0, 1, \ldots, 3^d - 1\}$ which is $\ell$-informative for $r$. Then it is possible to make predictions in such a way that after any mistake the algorithm will know some $\widehat{x}\in\{0, 1, \ldots, 3^d - 1\}$ which is $(\ell+1)$-informative for $r$.

If $x$ is $\ell$-informative for $r$, then $f_r(x) \neq y_x$ and the ternary expansion of $x$ starts with $r_{d - 1}\ldots r_{d - \ell +1}$ (and these digits are already known to the algorithm). Assume that the algorithm receives some $z\in\{0, 1,\ldots, 3^d - 1\}$. Let $z_{d - 1}\ldots z_0$ be its ternary expansion. We start with the following observation: if  $z_{d - 1} \ldots z_{d - \ell +1}\neq r_{d - 1}\ldots r_{d - \ell +1}$, then the algorithm can uniquely determine the value $f_r(z)$. Indeed, take the largest $i\in\{d - 1, \ldots, d - \ell + 1\}$ such that $z_i \neq r_i$. If $z_i < r_i$, then $z < r$, meaning that $f_r(z) = y_z$ (observe that $z$ and hence $y_z$ are known to the algorithm). Now, if $z_i > r_i$, then $z > r$ and $f_r(z) = r_i$, and the value of $r_i$ is already known to the algorithm. Overall, if  $z_{d - 1} \ldots z_{d - \ell +1}\neq r_{d - 1}\ldots r_{d - \ell +1}$, the algorithm can avoid making a mistake, maintaining the invariant trivially.

Assume now that $z_{d - 1} \ldots z_{d - \ell +1} =  r_{d - 1}\ldots r_{d - \ell +1}$. Let $a, b\in\{0, 1, 2\}$ be the $l$-th most significant digits of $x$ and $z$, respectively. In other words, $x$ and $z$ can be written as follows in the ternary expansion:
\[x =  r_{d - 1}\ldots r_{d - \ell +1} a\ldots, \qquad z =  r_{d - 1}\ldots r_{d - \ell +1}b\ldots\]

If $a = 0$, we observe that $x$ is already $(\ell+1)$-informative for $r$. Indeed, since $f_r(x) \neq y_x$, we have $r < x$. This can only happen if $r_{d - l} = 0 = a$, meaning that $\ell$ most significant digits of $r$ and $x$ coincide. In this case, the algorithm can predict $f_r(z)$ arbitrarily because the invariant for $\ell+1$ is already fulfilled.

If $b = 0$, the algorithm predicts that the value $f_r(z)$ is $y_z$. Assume that this leads to the mistake. We claims that $z$ is $(\ell+1)$-informative for $r$. Since $f_r(z) \neq y_z$, it remains to show that $\ell$ most significant digits of $r$ and $z$ coincide, meaning that $r_{d - l} = b = 0$. This is because $f_r(z) \neq y_z$ implies $r < z$.

Assume now that $a\le b$. Then the algorithm predicts $\lnot y_x$ for $f_r(z)$. We claim that if this leads to a mistake, then $x$ is $(\ell+1)$-informative for $r$. For that, it is enough to show that $r_{d - \ell} = a$. Assume for contradiction that $r_{d- \ell}\neq a$. Then, since $r < x$, we have $r_{d - \ell} < a\le b$. This means that $f_r(x) = f_r(z) = r_{d - \ell}$. However, $f_r(x) \neq y_x$ and $f_r(z) = y_x$, because we predicted $\lnot y_x$ for $f_r(z)$ and it was a mistake, a contradiction.  

We are only left with the case in which $a\neq 0$, $b \neq 0$ and $a > b$. This implies that $a = 2, b = 1$.

Assume first that $y_x = 0$. Then the algorithm predicts $y_z$ for $f_r(z)$. Assume that it leads to a mistake, meaning that $f_r(z) \neq y_z$. We claim that in this case, $z$ is $(\ell+1)$-informative for $r$. It remains to show that $r_{d -\ell}  = b = 1$. Assume for contradiction that $r_{d -\ell}  \neq  b$. Then, since $f_r(z) \neq y_z$, we have $r < z$ and hence $r_{d -\ell}  <  b = 1$, meaning that $r_{d- \ell} = 0$. But then, since $a = 2$, we have $f_r(x) = 0 = y_x$, a contradiction.

Finally, assume that $y_x = 1$. Then the algorithm predicts 0 for $f_r(z)$. Assume that it leads to a mistake. We claim that in this case, $x$ is $(\ell+1)$-informative for $r$. To show this, we have to show that $r_{d - \ell} = a = 2$. Assume for contradiction that $r_{d - \ell} \neq 2$. Then $f_r(x) = r_{d - \ell} = \lnot y_x = 0$. Since $b = 1$, this implies that $f_r(z)  = r_{d - \ell} = 0$. However, we predicted $0$ for $f_r(z)$ and it was a mistake, a contradiction.

\end{proof}

\end{document}